\newtheorem{theorem}{Theorem}
\title{On the Fine-Grained Hardness of Inverting Generative Models}
\author{%
  Feyza Duman Keles, Chinmay Hegde\thanks{FDK and CH are with the Tandon School of Engineering at New York University. This work was supported in part by the National Science Foundation (under grants CCF-2005804) and
  USDA/NIFA (under grant 2021-67021-35329).} \\
  New York University\\
  \texttt{\{fd2153@nyu.edu, chinmay.h\}@nyu.edu} \\
}
\begin{document}

\maketitle

\begin{abstract}

    The objective of generative model inversion is to identify a size-$n$ latent vector that produces a generative model output that closely matches a given target. This operation is a core computational primitive in numerous modern applications involving computer vision and NLP. However, the problem is known to be computationally challenging and NP-hard in the worst case. This paper aims to provide a fine-grained view of the landscape of computational hardness for this problem. We establish several new hardness lower bounds for both exact and approximate model inversion. In exact inversion, the goal is to determine whether a target is contained within the range of a given generative model. Under the strong exponential time hypothesis (SETH), we demonstrate that the computational complexity of exact inversion is lower bounded by $\Omega(2^n)$ via a reduction from $k$-SAT; this is a strengthening of known results. For the more practically relevant problem of approximate inversion, the goal is to determine whether a point in the model range is close to a given target with respect to the $\ell_p$-norm. When $p$ is a positive odd integer, under SETH, we provide an $\Omega(2^n)$ complexity lower bound via a reduction from the closest vectors problem (CVP). Finally, when $p$ is even, under the exponential time hypothesis (ETH), we provide a lower bound of $2^{\Omega (n)}$ via a reduction from Half-Clique and Vertex-Cover.

\end{abstract}

\newpage

\section{Introduction}

\textbf{Motivation.} Generative models have gained particular prominence in modern machine learning and artificial intelligence, spanning application domains including natural language processing, image processing, and computer vision. At a very abstract level, a generative model is a \emph{trained} machine learning model $G(\cdot)$ (such as a deep neural network with ReLU activations) that takes in as input a latent vector $z$ to produce an output vector $x$:
\begin{equation}
x = G(z) \,, \label{eq:exact}
\end{equation}
where the dimensions and types of $x$ and $z$ depend on the application. Two examples are as follows. If $G(\cdot)$ is the generator of a generative adversarial network (GAN)~\cite{gan}, the latent vector typically corresponds to a noise vector sampled from a random distribution, and the target is a photorealistic image. If $G(\cdot)$ is a generative text-to-image diffusion model (such as Stable Diffusion \cite{LDM} and DALL-E 2 \cite{unCLIP}), the latent vector corresponds to a vector representation of a text string, and the output is an image corresponding to the text. In this context, we study a natural problem that we call \emph{generative model inversion}, defined as follows:

\begin{quote}
\noindent{\textbf{Problem (informal):}} Given a generative model $G : \mathbb{R}^n \rightarrow \mathbb{R}^q$ and a target $x \in \mathbb{R}^q$, find a latent vector $z$ that (either exactly or approximately) produces $x$. 
\end{quote}

This problem has attracted significant attention in recent years across a 
plethora of applications in computer science, such as imaging, denoising,  deconvolution, and compressed sensing. See~\cite{gan_inversion_survey} for a comprehensive overview from the perspective of imaging. 

The widespread use of this problem in applications bring forth the urgency to rigorously study this problem from a theoretical standpoint. In essence, the core computational question can be framed as: given a generative model, can one provide \emph{provable algorithms} for deciphering the latent structure that gives rise to a particular target? In the theory literature, much of the academic focus has been on understanding this question under certain restrictive assumptions. Early works in this direction are the papers~\cite{hand2019global,huang2021provably}. These prove that in the \emph{realizable} case, and where the generative model corresponds to a (shallow, expansive) neural network with random Gaussian weights, polynomial-time inversion is feasible using a gradient-based approach. Follow-up works such as~\cite{latorre2019fast,nguyen2022provable} relax these randomness requirements, but require more sophisticated algorithms and still operate under hard-to-check deterministic assumptions. 

What can be said about the {computational complexity} for inverting models {for general trained networks}? For neural networks with ReLU nonlinearities, the range of the model $G$ can be viewed as the union of an exponential number of subspaces (see Appendix of~\cite{bora2017CS}), suggesting that the problem is inherently hard. Indeed, the only rigorous lower bound we are aware of is the work of Lei et al.~\cite{lei2019inverting}, who prove that for two-layer ReLU networks, the problem of deciding whether or not a particular target $x$ lies within its range is NP-hard, via a reduction from 3SAT.

Notably, in the vast majority of practical applications, achieving an exact inversion is not a stringent requirement. Rather, an approximate solution often suffices. The computational complexity of this softer, yet practically more relevant problem, is a  untapped domain from the theory perspective.

\textbf{Our contributions.} The goal of this paper is to shed further light on the landscape of algorithms and computational complexity related to the inversion of generative models. We provide several new results that leverage recent advances in the machinery of \emph{fine-grained complexity}~\cite{abboud2015tight,rubinstein2019seth,bringmann2021fine}.

Our first contribution strengthens the current state of the art in terms of lower bounds. We establish a fine-grained hardness result for the exact inversion of generative models, conditional on the Strong Exponential Time Hypothesis (SETH). This result serves as an extension of the NP-hardness of this problem proven by Lei et al~\cite{lei2019inverting}.

Our second (and main) contribution is to establish the first known lower bounds on the hardness of approximate generative model inversion, which to our knowledge has attracted little attention in the literature. Concretely, we study (additive) $\varepsilon$-approximate solutions to the following optimization problem:
\begin{equation}
\min_z \|x - G(z) \|_p \,, \label{eq:lpapprox}
\end{equation}
for which we present two distinct sets of fine-grained hardness results. For the rest of this paper, we exclusively focus on the case where $G$ is a deep neural network with ReLU activations; this is the most common family of generative models currently found in practical applications.

For approximate inversion under the \( \ell_p \)-norm where \( p \in [1,3,5,\ldots] \), when $G$ is a width-$n$ network we show an \( \Omega(2^n) \) complexity lower bound, conditional on SETH. 

The practically more prevalent case of approximation under the \( \ell_2 \) norm is somewhat more challenging \cite{aggarwal2022couldnt}. Under the assumption that the Exponential Time Hypothesis (ETH) holds, we present a slightly weaker, \( 2^{\Omega(n)} \), complexity lower bound.

\textbf{Techniques.} To accomplish these contributions, we employ a variety of fairly intuitive techniques. The bulk of our techniques are inspired by a set of results recently developed for the fine-grained hardness of the \( k \)-sparse linear regression (\( k \)-SLR) problem~\cite{gupte2021fine}:
$$
\min_z \| y - Az \|_p, \quad \text{s.t.}~\|z\|_0 \leq k.
$$
where $y \in \mathbb{R}^M$ is a target and $A \in \mathbb{R}^{M \times N}$ is a design matrix. For this $\ell_0$-constrained optimization problem, the authors show that under popular conjectures from fine-grained complexity, there exists no algorithm for this problem with running time $N^{(1-\varepsilon)k}$ for any $\varepsilon > 0$. However, our setting of generative model inversion is qualitatively quite different from $k$-SLR, and we have to make non-trivial changes to their proof techniques to succeed in our case.

In the setting of exact inversion, our improved lower bound is predicated on a reduction from the \( k \)-SAT problem, which strengthens the result in  \cite{lei2019inverting} provided SETH holds. 

In the setting of approximate inversion, our techniques are multi-faceted. For odd $\ell_p$ norms, our lower bound results stem from a reduction from the closest vector problem (CVP). In contrast, for $p=2$, we derive two lower bounds conditional on ETH, respectively through reductions from the Half-Clique and Vertex Cover problems.

\textbf{Implications.} Exact inversion was already known to be hard due to~\cite{lei2019inverting}, so our first result should not be too surprising. However, the complexity of approximate inversion was not known, and we believe that our main results may serve as lamp posts for (provable) algorithm design for this area. Moreover, we believe that the connection to fine-grained bounds for sparse linear regression~\cite{gupte2021fine} is interesting and may lead to further clarity of the fundamental complexities of these two important families of problems in algorithmic learning theory.

\section{Related work}
\label{sec:related}

\textbf{Generative model inversion.} 
Recovery of latent vectors that generate a given target has attracted significant attention over the last three decades. The focus has shifted away from ``linear'' generative models such as sparse models~\cite{bpdn,cosamp,modelcs} and towards nonlinear generative models such as convolutional neural networks \cite{onenet,mousavi2017learning}, pre-trained generative priors \cite{CSGAN}, or untrained deep image priors \cite{DIP,netgd,deepcs}. 

However, progress on the theoretical side has been more modest. On the one hand, the seminal work of~\cite{CSGAN} established the first {statistical} upper bounds (in terms of measurement complexity) for compressed sensing with generative models. These bounds have been shown in~\cite{scarlett2020} to be nearly optimal. On the other hand, provable \emph{algorithmic} upper bounds for generative inversion are only available in restrictive cases.  The paper~\cite{shah2018solving} proves the convergence of projected gradient descent (PGD) for compressed sensing with generative priors under the assumption that the range of the (deep) generative model $G$ admits a polynomial-time projection oracle. This assumption is equivalent to showing that~\eqref{eq:lpapprox} can be solved in polynomial time for $p=2$. 

A (limited) number of papers have attempted to establish upper bounds on either~\eqref{eq:exact} or~\eqref{eq:lpapprox}. The paper~\cite{hand2019global} proves the convergence of (a variant of) gradient descent for shallow generative priors whose weights obey a distributional assumption. The paper~\cite{lei2019inverting} shows the correctness of an intuitive layer-wise inversion algorithm for sufficiently expansive networks, as well as establishes NP-hardness lower bounds in the general case.

Perhaps the most general algorithmic result in this line of work is by~\cite{latorre2019fast}. There, the authors show that under certain structural assumptions on $G$, a linearized alternating direction method of multipliers (ADMM) applied to a regularized mean-squared error loss converges to a neighborhood of $x^*$. In subsequent work, \cite{cslangevin} show that Langevin Markov Chain Monte Carlo (MCMC) also converges (in expectation) to a neighborhood of the true solution, under similar assumptions. However, these assumptions are somewhat hard to verify in practice.

More recently, \cite{whang2020compressed,asim2019invertible} have advocated using \textit{invertible} generative models, which use real-valued non-volume preserving (NVP) transformations \cite{dinh2016density}. An alternate strategy for sampling images consistent with linear forward models was proposed in \cite{lindgren2020conditional} where the authors assume an invertible generative mapping and sample the latent vector $z$ from a second generative invertible prior. In contrast, our focus is on more generic families of generative neural whose layer-wise forward mappings involve ReLU activations and are not necessarily invertible. 

\textbf{Fine-grained complexity.}  Classical complexity theory has traditionally attempted to delineate the boundary between problems that admit efficient (polynomial-time) algorithm and problems that do not. A fine(r) grained picture of the landscape of polynomial-time has begun to emerge over the last decade. In particular, the focus has shifted towards pinning down the {exponent}, $c$, of a problem that can be solved in polynomial time $\tilde{O}(n^c)$. Similar to NP-Hardness results, most of these newer results are conditional and rely on reductions from popular (but plausible) conjectures such as the Strong Exponential Time Hypothesis (SETH)~\cite{seth}, \cite{seth2}. See the relevant surveys \cite{indyk2017beyond}, \cite{lokshtanov2013lower}, \cite{rubinstein2019seth}, and \cite{bringmann2021fine} for comprehensive overviews of this emerging area. In particular, this approach has been shown to provide conditional lower bounds on well-known problems such as edit distance \cite{backurs2015edit}, Frechet distance \cite{bringmann2014walking}, dynamic time warping \cite{bringmann2015quadratic}, longest common subsequence (LCS) \cite{abboud2015tight}, and string matching \cite{abboud2018if}.

In the context of machine learning, reductions from SETH have been fruitfully applied to problems such as clustering \cite{abboud2019subquadratic}, kernel PCA \cite{backurs2017fine}, sparse linear regression \cite{gupte2021fine}, Gaussian kernel density estimation~\cite{aggarwal2022optimal}, and approximate nearest neighbors \cite{rubinstein2018hardness}. In very recent work, this approach has also been shown to imply an $\Omega(n^2)$-lower bound for transformer models with input size $n$~\cite{keles2023computational}.

\section{Notations and Preliminaries}\label{sec:prelims}

Throughout the paper, we use upper case characters to denote matrices, while lower case characters for vectors. We use $A_i$ to denote the $i^{th}$ row of matrix $A$. We use $A_{ij}$ to denote the element at the $i^{th}$ row and $j^{th}$ column of matrix $A$. Also, we use $a_i$ to denote the $i^{th}$ row of vector $a$. $\Vec{1}$ represents the all-ones vector, while $\Vec{0}$ represents the all-zeroes vector. For a positive integer $n \in \mathbb{Z}^+$, $[n]$ denotes the set of all positive integers up to $n$, and we used $a_{[n]}$ to call the first $n$ entries of vector $a$. 

\subsection{Generative Model Inversion}

A 1-layer ReLU network $G_1: \mathbb R^n \to \mathbb R^{m_1} $ can be defined as $G_1(z)=\textnormal{ReLU}(W_1z+b_1)$ with weight matrix $W_1\in \mathbb R^{m_1 \times n}$ and bias $b_1 \in \mathbb R^{m_1}$, where $m_1$ is the number of hidden neurons. An $L$-layer neural network $G:=G_L$ can be expressed with the following recurrence relation:
\begin{align}\label{eq_relu_network}
    G_{l}(z)=\textnormal{ReLU}(W_lG_{l-1}(z)+b_l) \textnormal{ for }  l \in \mathbb Z^+ ,\ l = 2,3,\ldots,L, 
\end{align}
where $W_l \in \mathbb R^{m_{l} \times m_{l-1}}$ are the layer-wise weight matrices, and $b_l \in \mathbb R^{m_l}$ are layer-wise bias vectors. We assume that the network width, $\max_l m_l$, is bounded as $O(n)$ unless otherwise specified.

In the generative inversion problem, we are given a ReLU network $G$ and an observation $x$. Then, the purpose is to determine the closest point of the range of the neural network to the input $x$. Mathematically, we want to find $z^*$ that satisfies the following under a given norm: 
$$z^*= \arg\min_{z} \|G(z)-x\| .$$ 
Stated as a decision problem, the goal of exact recovery is to determine if $x$ lies within the range of $G$. In other words, distinguish between two cases: 
\begin{itemize}[nosep,leftmargin=*]
\item YES: either there exists a $z^*$ such that $G(z^*)=x$, 
\item NO: for all $z$ we have $G(z)\neq x$. 
\end{itemize}
This problem is norm-independent because, under any norm, $\|G(z)-x\|=0$ whenever $G(z)=x$.

On the other hand, in many practical cases instead of having an exact solution, finding a close enough point to the range of the network suffices.
Therefore, we investigate the hardness of the decision problem where the goal is to distinguish between the two cases for a parameter $\delta > 0$: 
\begin{itemize}[nosep,leftmargin=*]
\item YES: there exists a point $z^*$ such that $\|G(z^*)-x\|<\delta$, 
\item NO: for all $z$ we have $\|G(z)- x\|\geq \delta$. 
\end{itemize}

The choice of the norm $\|\cdot\|$ will play a significant role. Indeed, we prove different hardness results when $\|\cdot\|$ corresponds to $\ell_p$-norm depending on the parity of $p$.

\subsection{(Strong) Exponential Time Hypothesis and $k$-SAT}

Given a SAT formula on $n$ variables with each clause of size $k$, the $k$-SAT problem is to distinguish between the cases whether the formula is satisfiable, or whether it is not. Despite decades of effort, no one has invented a faster-than-exponential ($O(2^n)$) time algorithm for this problem. In fact, unless $P=NP$, no polynomial-time algorithm exists. 

The Strong Exponential Time Hypothesis (SETH) is a strengthening of this statement~\cite{seth}: for every $\varepsilon > 0$, there is no (randomized) algorithm that solves $k$-SAT in $2^{(1-\varepsilon)n}$ time. The Exponential Time Hypothesis (ETH) is a (slightly) weaker conjecture: there exists $\delta > 0$ such that 3-SAT cannot be solved in time $2^{\delta n}$. 

\subsection{Closest Vector Problem}

In some of our proofs, we leverage SETH-hardness of the closest vector problem (CVP), a central problem in cryptography and complexity theory defined as follows. Consider a lattice $\mathcal{L}=\mathcal{L}(\mathbf{B})=\{\mathbf{B}z \; | \; z \in \mathbb Z ^n\}$, where $B=(\mathbf v_1,\mathbf v_2,\dots,\mathbf v_n)$ and all the vectors $\mathbf v_i \in \mathbb R ^d$. Given a target vector $\mathbf t$, define $dist_p(\mathcal{L},\mathbf t) = \min_{\mathbf x \in \mathcal{L}(\mathbf{B})} \|\mathbf x - \mathbf t \|_p$

The goal of the CVP$_p$ problem is to distinguish between two cases:
\begin{itemize}[nosep]
\item YES: $dist_p(\mathcal{L},\mathbf t) \leq r$ and 
\item NO: $dist_p(\mathcal{L},\mathbf t)>r $.
\end{itemize}
A result of~\cite{bennett2017quantitative} states that assuming SETH, then CVP$_p$ cannot be solved in $O(2^{(1-\epsilon)n})$ time for odd integers $p\geq 1, p \neq 2 \mathbb Z$.

As follow-up work, \cite{aggarwal2021fine} proved the same hardness for the so-called (0,1)-CVP$_p$ problem: for any $1 \leq p \leq \infty$, distinguish between the two cases: 
\begin{itemize}
    \item YES: $\|By^*-t \| \leq r$ for some $y^*\in \{ 0,1\}^n $ and 
    \item NO: $\|By-t \|>r+\tau $ for all $y\in \mathbb Z$.
\end{itemize}

\subsection{Half-Clique and Vertex Cover}

We also consider two graph optimization problems: Vertex Cover and Half-Clique. The goal in Vertex Cover is to find a set of vertices of minimum size that touch every edge in a given graph; the decision version is to decide whether or not a vertex cover of given size exists. The Half-Clique problem is a special case of the $k$-CLIQUE problem (for $k=n/2$), where the goal is to decide whether or not there exists a clique with $k$-vertices. Both decision problems are classical, NP-complete, and ETH-hard \cite{lokshtanov2013lower}.

%\newpage

%\section{Problem Statement}

\section{Hardness of Exact Inversion}\label{section_exact}
As stated, we are given a ReLU network $G(\cdot)$ and an observation $x$. The problem is to determine whether there exists a $z$ such that $G(z)=x$.  The work \cite{lei2019inverting} shows the NP-hardness of the exact inversion problem. We strengthen this to show SETH-hardness below in \ref{thm_exact_binary}. First, we provide hardness results when $z \in \{ 1,-1 \}^n$, and we generalize these results for  $z \in \mathbb R^n$ in Theorem \ref{thm_exact_real}.

\begin{theorem}
Suppose SETH holds. Then for any $\epsilon>0$ there is a 2-layer, $O(n)$-width ReLU network, $G_2(z)$ and a target $x$ for which no $O(2^{(1-\epsilon)n})$ time algorithm can demonstrate the existence of $z \in \{ 1,-1 \}^n$ that satisfies $G_2(z)=x$.
\label{thm_exact_binary}
\end{theorem}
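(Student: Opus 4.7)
The plan is to give a direct reduction from $k$-SAT to the exact inversion decision problem. Fix $\epsilon > 0$; the constant $k$ will be chosen at the end to be large enough for SETH. Starting from a $k$-SAT instance $\phi$ on $n$ variables, identify the Boolean hypercube with $\{+1,-1\}^n$ via $x_i = \mathrm{True} \iff z_i = +1$. A single literal $\ell$ over variable $x_i$ then takes truth value $(1+\sigma z_i)/2 \in \{0,1\}$, where $\sigma = +1$ if $\ell = x_i$ and $\sigma = -1$ if $\ell = \neg x_i$, so the number of satisfied literals in a clause $C_j$ is the affine function $s_j(z) = k/2 + (1/2)\sum_t \sigma_{j,t}\, z_{i_{j,t}}$, and $C_j$ is satisfied iff $s_j(z) \ge 1$.

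From this, I build a 2-layer network as follows. The first hidden layer has $m$ units, one per clause, each computing $h_j(z) = \mathrm{ReLU}(1 - s_j(z))$. Because $s_j(z) \in \{0,1,\ldots,k\}$ for $z \in \{+1,-1\}^n$, we have $h_j(z) = 0$ whenever $C_j$ is satisfied and $h_j(z) = 1$ otherwise. For the second layer, take the weight matrix to be the row $\mathbf{1}^\top$ with zero bias, so $G_2(z) = \mathrm{ReLU}\bigl(\sum_j h_j(z)\bigr) = \sum_j h_j(z)$ (the outer ReLU is a no-op since the inner sum is nonnegative). Setting target $x = 0$ then yields $G_2(z) = x$ for some $z \in \{+1,-1\}^n$ iff $\phi$ is satisfiable, and the construction runs in polynomial time in the size of $\phi$.

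The one issue is the width requirement $\max_l m_l = O(n)$, whereas $m$ can a priori be $\Theta(n^k)$. I would resolve this by first applying the Impagliazzo--Paturi--Zane sparsification lemma to $\phi$, which produces at most $2^{\epsilon' n}$ sub-instances each with $O(n)$ clauses, such that $\phi$ is satisfiable iff at least one sub-instance is. Applying the encoding above to each sub-instance yields networks of hidden width $O(n)$. If exact inversion could be decided in $O(2^{(1-\epsilon)n})$ time, one could solve $k$-SAT in time $2^{\epsilon' n} \cdot \mathrm{poly}(n) \cdot O(2^{(1-\epsilon)n})$; choosing $\epsilon' < \epsilon$ and $k$ sufficiently large produces the desired contradiction with SETH.

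The main obstacle is this width control: the raw reduction gives a hidden layer of size equal to the number of clauses, which without sparsification may be super-linear. The encoding itself is essentially effortless because on the $\pm 1$ cube each clause predicate becomes a single affine threshold that one ReLU checks exactly, and the restriction $z \in \{+1,-1\}^n$ in the theorem statement means we do not need extra gadgets to enforce integrality of $z$.
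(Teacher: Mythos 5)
Your reduction is at its core the same as the paper's: one first-layer ReLU unit per clause which evaluates to $0$ exactly when the clause is satisfied under the $\{+1,-1\}$ encoding of Boolean variables (the paper writes the pre-activation as $1-2s_j$ and you write it as $1-s_j$ where $s_j$ is the number of satisfied literals, but both are $\le 0$ iff $s_j\ge 1$ and equal $1$ iff $s_j=0$, so the gadget is identical up to rescaling), a second layer that sums the clause indicators, and target $x=0$. So the encoding of the SAT instance into a $2$-layer ReLU network matches the paper's.

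Where you genuinely improve on the paper's argument is the width bound. The paper's proof sets the first hidden layer to have $m$ units, where $m$ is the number of clauses, and never reconciles this with the theorem's claim of $O(n)$ width; for an arbitrary $k$-CNF, $m$ can be $\Theta(n^k)$, so the construction as written does not by itself yield an $O(n)$-width network. You correctly flag this and resolve it by invoking the Impagliazzo--Paturi--Zane sparsification lemma to replace $\phi$ by a disjunction of $2^{\epsilon' n}$ sub-instances each with $O(n)$ clauses, then amortize the $2^{\epsilon' n}$ blowup against the assumed $O(2^{(1-\epsilon)n})$ inversion algorithm by taking $\epsilon'<\epsilon$. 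That step is standard but it is exactly what is needed to make the $O(n)$-width claim honest, and the paper omits it. In short: same reduction, but your write-up closes a real gap in the width accounting.
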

\begin{proof}
The idea of the proof is a reduction from $k$-SAT. 

Consider the $k$-SAT problem which has $m$-clauses and $n$-literals.  Let $z=[z_1,z_2,\dots,z_n]^T\in\{1,-1\}^n$ where $z_i$ represents the assignment of $i^{th}$ literal. If in the assignment, $i^{th}$ literal takes {TRUE}, then $z_i=1$ and if it takes {FALSE} then $z_i=-1$.

%We will construct first layer weight matrix $W_1$ as an $m \times n$ matrix whose rows correspond to clauses (call $j^{th}$ row as $c_j$) and columns correspond to literals in the following way: If $i^{th}$ literal appears positively in $j^{th}$ clause then put $-1$ to the $i^{th}$ element of the corresponding $j^{th}$ row (in other words $c_{ji}=-1$), similarly if a literal appears negatively then put $1$ to the $i^{th}$ element ($c_{ji}=1$), and if there is no appearance of this literal put $0$ to the $i^{th}$ element ($c_{ji}=0$). 

We will construct the first-layer weight matrix $W_1$ as an $m \times n$ matrix whose rows correspond to clauses (call $j^{th}$ row as $(W_1)_j$) and columns correspond to literals in the following sense. If $i^{th}$ literal appears positively in $j^{th}$ clause then put $-1$ to the $i^{th}$ element of the corresponding $j^{th}$ row (in other words $(W_1)_{ji}=-1$). Similarly if a literal appears negatively, then place $1$ to the $i^{th}$ element ($(W_1)_{ji}=1$), and if there is no appearance of this literal, then place $0$ to the $i^{th}$ element ($(W_1)_{ji}=0$). Also, the bias vector of the first layer is constructed as follows: $b_1=[-(k-1), -(k-1), \dots,-(k-1)]^T \in \mathbb R^m$. For the second layer, $W_2$ is all $1$ matrix in $\mathbb R^{1\times m}$, while $b_2=0$. Lastly, the given observation $x \in \mathbb R$ is $0$.

The construction of these matrices takes $O(mn)$-time. We will show that determining whether there is a $z$ for which $G_2(z)=x$ (defined by \eqref{eq_relu_network}) in this construction can be reduced from $k-SAT$.

Denote the $j^{th}$ entry of the multiplication of $W_1z$ is $(W_1z)_j= \langle (W_1)_j,z \rangle =\sum_{i=1}^n (W_1)_{ji}z_i$. If the assignment of the literals satisfies $j^{th}$ clause, then by the definition of the variables $(W_1)_{ji}z_i=-1$ for at least one $i \in [n]$. Since each clause has $k$ literals in $k$-SAT, $\langle (W_1)_j,z \rangle =\sum_{i=1}^n (W_1)_{ji}z_i \leq k-1$ in the satisfied assignment case. Also $(b_1)_j=-(k-1)$, so that $\langle (W_1)_j,z \rangle + (b_1)_j \leq 0$. Because of the ReLU function, $\textnormal{ReLU}\big(\langle (W_1)_j,z \rangle + (b_1)_j \big)= 0$. This means that $j^{th}$ entry of $G_1(z)=\textnormal{ReLU}(W_1z + b_1 )$ is $0$ when the assignment satisfies $j^{th}$ clause.

If the assignment of the literals does not satisfy $j^{th}$ clause, then for all the literals that appear in the clause, we have $(W_1)_{ji}z_i=1$. Since each clause has $k$ literals in $k$-SAT, $\langle (W_1)_j,z \rangle =\sum_{i=1}^n (W_1)_{ji}z_i = k$ in the non-satisfied assignment case. Also $(b_1)_j=-(k-1)$, so that $\langle (W_1)_j,z \rangle + (b_1)_j = 1$. Positive numbers stay the same after ReLU function, so $\textnormal{ReLU}(\langle (W_1)_j,z \rangle + (b_1)_j )= 1$. This means that $j^{th}$ entry of $G_1(z)=\textnormal{ReLU}(W_1z + b_1 )$ is $1$ when the assignment does not satisfy $j^{th}$ clause.

Consider an assignment of the literals that satisfies all the clauses. Then, all the entries of $G_1(z)=\textnormal{ReLU}(W_1z + (b_1)_j )$ are $0$. For the second layer, $W_2$ is all $1$ matrix in $\mathbb R^{1\times m}$ and $b_2=0$, so it is the summation of the entries of $G_1(z)$. Then, when all clauses are satisfied, we have

$$ G_2(z)=\textnormal{ReLU}(W_2 G_1(z) + b_2) = 0 =x \, .
$$

On the other hand, consider an assignment of the literals that does not satisfy at least one clause. Then, at least one of the clauses is not {TRUE}. So, at least one of the entries of $G_1(z)=\textnormal{ReLU}(W_1z + b_j )$ is $1$. As the second layer adds up all the entries, we obtain,

$$ G_2(z)=\textnormal{ReLU}(W_2 G_1(z) + b_2) \geq 1 >x
$$

Hence, there exists a $z^* \in \{1,-1\}^n$ for which $G_2(z^*)=x$ if and only if there is a satisfied assignment to $k$-SAT. This completes the reduction.

As a result, for any $\epsilon>0$, there is a 2-layer ReLU network $G_2$ and observation $x$ such that there is no deterministic $2^{(1-\epsilon)n}$-time algorithm to determine whether there exists a $z$ with $G_2(z)=x$.

\end{proof}

Theorem \ref{thm_exact_binary} holds for the case of binary latent vectors. We extend this to all real valued latent vectors in Theorem \ref{thm_exact_real}. To achieve this, we use an extra two layers to force the input of the third layer to $\{0,1\}^n$ as in Theorem~\ref{thm_exact_binary}.

%\paragraph{Note.} The proof of this problem is a modified version of the proof of Theorem 1 in the paper. Instead of a reduction from 3-SAT, we are making a reduction from k-SAT with a similar construction of the network. So, the problem reduces to the SETH assumption. It means that the problem is not only NP-complete, but also it has an exponentially lower bond (by the assumption of SETH).   

%\paragraph{Proof.} Assumption of SETH; for any $\epsilon > 0$, there exists $k \in \mathbb{Z}^+$ such that there is no $2^{(1-\epsilon)n}$ time algorithm for $k$-SAT, where $n$ denotes the number of variables.

%Suppose for an $\epsilon > 0$, there is an $2^{(1-\epsilon)n}$ time algorithm to determine whether there exists $z$ such that $G(z)=x$ for any $G$ and $x$. For this $\epsilon$, the SETH assumption gives us a $k$ value that k-SAT is not in time $2^{(1-\epsilon)n}$. Then we construct network G as we described above. Also, select x as $0$. So, if the network says there is a z, then we can say that the given $k$-SAT is satisfiable, if there is no z, then it is unsatisfiable. It means we can decide its satisfiability in $2^{(1-\epsilon)n}$ time. This contradicts SETH. 

\begin{theorem}
Suppose SETH holds. Then for any $\epsilon>0$, there is a 4-layer, $O(n)$-width ReLU network $G_4$ and an observation $x$ such that there is no $O(2^{(1-\epsilon)n})$ time algorithm to determine whether there exists $z \in\mathbb R^n$ satisfying $G_4(z)=x$. 
\label{thm_exact_real}
\end{theorem}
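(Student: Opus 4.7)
The plan is to reduce from Theorem~\ref{thm_exact_binary} by prepending two ReLU layers that act as a ``binarization gadget'': together with a carefully chosen target $x$, they will force any preimage $z \in \mathbb{R}^n$ to satisfy $|z_i| = 1$ for every coordinate, after which the last two layers simply replay the $k$-SAT gadget of Theorem~\ref{thm_exact_binary} on $z \in \{-1,1\}^n$.

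Concretely, layer~1 outputs, for each coordinate $i$, the pair $p_i := \textnormal{ReLU}(z_i)$ and $n_i := \textnormal{ReLU}(-z_i)$; these are non-negative and satisfy $p_i + n_i = |z_i|$ and $p_i - n_i = z_i$. Layer~2 then computes the two ``slack'' features $a_i := \textnormal{ReLU}(p_i + n_i - 1) = \textnormal{ReLU}(|z_i| - 1)$ and $b_i := \textnormal{ReLU}(1 - p_i - n_i) = \textnormal{ReLU}(1 - |z_i|)$, together with pass-through copies of $p_i$ and $n_i$ (achievable because $p_i, n_i \geq 0$, so a row of the identity with zero bias yields $\textnormal{ReLU}(p_i) = p_i$). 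Observe that $a_i = b_i = 0$ iff $|z_i| = 1$, i.e., iff $z_i \in \{-1, 1\}$.

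Layers~3 and~4 now apply the $k$-SAT gadget of Theorem~\ref{thm_exact_binary} to $\tilde{z}_i := p_i - n_i$. That is, layer~3 computes, for each clause $j$, the quantity $c_j := \textnormal{ReLU}(\sum_i (W_1^{\text{SAT}})_{ji}\tilde{z}_i - (k-1))$, and layer~4 sums them into $s := \textnormal{ReLU}(\sum_j c_j) = \sum_j c_j$ (trivially, since $c_j \geq 0$). In parallel, both layers pass the non-negative constraint features $a_i, b_i$ through to the output. The target $x$ is the all-zeros vector. Then $G_4(z) = x$ forces $a_i = b_i = 0$ for all $i$ (so $z_i \in \{-1,1\}$) and $s = 0$ (so all clauses are satisfied under the corresponding $\pm 1$ assignment); conversely, any satisfying $\pm 1$ assignment yields $G_4(z) = x$. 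Thus, by Theorem~\ref{thm_exact_binary}, an $O(2^{(1-\epsilon)n})$-time algorithm for this 4-layer instance would refute SETH.

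The only points that require care, rather than any real obstacle, are: (i) verifying that the pass-through of non-negative quantities through later ReLU layers is indeed benign; (ii) checking the width bound — layer~1 has $2n$ neurons, layer~2 has $4n$, layer~3 has $m + 2n$, and layer~4 has $1 + 2n$ neurons, and invoking the sparsification lemma we may assume $m = O(n)$, so every layer has width $O(n)$; and (iii) noting that the reduction is polynomial-time in $n$ and preserves the number $n$ of latent variables, so the SETH lower bound transfers without loss.
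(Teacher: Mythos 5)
Your argument is correct and achieves the same goal as the paper's, but through a genuinely different binarization gadget. The paper first clips $z$ coordinate-wise to $[-1,1]$ via two layers (producing $v_i=\min\{\max\{z_i,-1\},1\}$), then enforces $v\in\{-1,1\}^n$ with a single \emph{aggregated} constraint: it adds two nodes summing $\max\{v_i,0\}$ and $-\min\{v_i,0\}$ over all $i$, so that the last output coordinate equals $\sum_i|v_i|\le n$ with equality exactly when every $|v_i|=1$; the target is then $x=[0,n]^T\in\mathbb R^2$. You instead keep the positive and negative parts $p_i,n_i$ separate, and enforce $|z_i|=1$ \emph{per coordinate} through $2n$ slack features $a_i=\textnormal{ReLU}(|z_i|-1)$ and $b_i=\textnormal{ReLU}(1-|z_i|)$ that must each vanish; the target is the all-zeros vector. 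Both constructions fit in four layers of width $O(n)$. Your per-coordinate approach is somewhat cleaner mechanically: each output node is literally a single ReLU of an affine combination of the previous layer's outputs (whereas the paper's $u_{m+1}=\sum_i\max\{v_i,0\}$ is really a sum of ReLUs, which it treats a bit informally as a single layer-3 node), and your all-zeros target avoids needing a non-trivial observation. The trade-off is a larger output dimension ($2n+1$ vs.\ $2$), which is immaterial for the theorem statement. You are also right to flag the sparsification lemma for the width bound — the paper uses this implicitly when claiming $O(n)$ width in Theorem~\ref{thm_exact_binary} as well.
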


\begin{proof}
    We again make a reduction from k-SAT, by modifying the proof of Theorem \ref{thm_exact_binary}. The input is $z=[z_1, z_2, \dots, z_n]^T \in \mathbb R^n$. The first 2 layers will be used to map all the values into the interval $[-1,1]$. The first layer is to bound from below with $-1$, so the output is $\max\{z_i,-1\} $ for all $i \in [n]$. The second layer is to bound from above with $1$, so the output of the second layer is $v \in \mathbb R^n$ such that 
    $v_i=\min\{ \max \{ z_i,-1\},1\}$ for all $i \in [n]$.
    
    For the third layer, the output will be $u \in R^{m+2}$ where first $m$ nodes will be defined as the first layer of Theorem \ref{thm_exact_binary} with same weight and bias, and we will add 2 more nodes $u_{m+1}=\sum_{i=1}^n \max\{v_i,0\}$ and $u_{m+2}=\sum_{i=1}^n -\min \{ v_i,0\}$.

    The last layer has 2 output nodes, the first output node is the summation of the first $m$ input nodes (i.e. $G_4(z)_1=\sum_{i=1}^n u_i$), and the second output node is the summation of the last 2 nodes (i.e. $G_4(z)_2=u_{m+1}+u_{m+2}$). Also suppose the given observation $x=[0,n]^T \in \mathbb R^2$.

    For any real number $a \in [-1,1]$, we have the following inequality; $\max\{a,0\}-\min\{a,0\} \leq 1$, and the equality holds only when $a=-1$ or $a=1$.  Therefore, since $v_i=\min\{ \max \{ z_i,-1\},1\} \in [-1,1]$ for all $i \in [n]$, we have $G_4(z)_2=u_{m+1}+u_{m+2}=\sum_{i=1}^n \max\{v_i,0\}+\sum_{i=1}^n -\min \{ v_i,0\}=\sum_{i=1}^n (\max\{v_i,0\}-\min \{ v_i,0\})\leq n = x_2$, and equality holds when all $v_i$ are $1$ or $-1$. 

    So, the first $m$ nodes of the third layer, the first node of the fourth layer, and $x_1=0$ will create the same structure as in the proof of Theorem \ref{thm_exact_binary}.
    
\end{proof}

By Theorem \ref{thm_exact_binary} and \ref{thm_exact_real}, we can conclude that the exact variant of generative model inversion for deep ReLU networks is not only NP-complete; any algorithm faces an \emph{exponential time} (in terms of the latent vector size) lower bound under the assumption that SETH holds.

Practically speaking, while inverting generative models, it is often sufficient to find a ``good enough'' point rather than exact recovery. Therefore, in the following section, we study the hardness of the existence of an approximate point.

\section{Hardness of Approximate Inversion}

We investigate the hardness of determining whether there exists $z$ such that $\|G(z)-x\|<\delta$ for a given $\delta>0$ and various choices of norms $\|\cdot\|$. 

We first lower bound the complexity of the inverse generative model under the $\ell_p$-norm for positive odd numbers $p$ by reducing it from CVP$_p$. This implies a runtime lower bound of $O(2^n)$ by assuming SETH. On the other hand, for positive even numbers $p$, we reduce the problem from the Half-Clique and Vertex Cover Problems. This implies a (slightly weaker, but still exponential) runtime lower bound of $2^{\Omega(n)}$ by assuming ETH.

\subsection{Reduction from CVP}

In Theorem \ref{thm_approx_binary}, we present hardness results on the input $z \in \{ 0,1 \}^n$, which are derived by applying a reduction from $(0,1)$-CVP$_p$. Our results are inspired by the proofs of the hardness of the sparse linear regression problem established in \cite{gupte2021fine}.  We further extend our findings by proving that the results also hold for $z \in \mathbb R^n$ in Theorem \ref{thm_approx_real}, with a reduction from the binary case.

\begin{theorem}
Assume SETH. Then for any $\epsilon>0$, there is a 1-layer, $O(n)$-width ReLU network $G_1$ and an observation $x$ such that there is no $O(2^{(1-\epsilon)n})$ time algorithm to determine whether there exists $z \in \{ 0,1 \}^n$ that satisfies $\|G_1(z)-x\|_p<\delta$ for a given $\delta>0$ and any positive odd number $p$. 
\label{thm_approx_binary}    
\end{theorem}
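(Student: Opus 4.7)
My plan is to reduce (0,1)-CVP$_p$ for any fixed positive odd $p$ to the one-layer approximate inversion problem, inspired by (but simpler than) the sparse linear regression hardness reduction of \cite{gupte2021fine}. By combining the SETH-hardness of CVP$_p$ for odd $p$ from \cite{bennett2017quantitative} with the $\{0,1\}$-coefficient strengthening of \cite{aggarwal2021fine}, it is known that no $O(2^{(1-\epsilon)n})$ algorithm solves (0,1)-CVP$_p$ on lattices of ambient dimension $d=O(n)$ unless SETH fails; this is the hardness I would transfer.

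Given a (0,1)-CVP$_p$ instance with basis $B\in\mathbb{R}^{d\times n}$, target $t\in\mathbb{R}^d$, radius $r$, and promise gap $\tau$, I would build the one-layer ReLU network $G_1:\mathbb{R}^n\to\mathbb{R}^{2d}$ obtained by stacking positive and negative copies of the affine map $z\mapsto Bz-t$:
\begin{equation*}
W_1=\begin{pmatrix} B \\ -B\end{pmatrix},\quad b_1=\begin{pmatrix} -t \\ t\end{pmatrix},\quad G_1(z)=\textnormal{ReLU}(W_1z+b_1),
\end{equation*}
take the observation $x=\mathbf{0}\in\mathbb{R}^{2d}$, and pick $\delta\in(r,r+\tau]$, say $\delta=(r^p+\tau^p/2)^{1/p}$. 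The algebraic core of the reduction is the pointwise identity $(\max\{a,0\})^p+(\max\{-a,0\})^p=|a|^p$, valid for every scalar $a$ and every $p>0$, which holds because exactly one of the two nonnegative summands is nonzero and equals $|a|$. Applied coordinatewise to $a_i=(Bz)_i-t_i$, this gives $\|G_1(z)-x\|_p^p=\|Bz-t\|_p^p$ for every $z\in\{0,1\}^n$.

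With the identity in hand, a YES instance of (0,1)-CVP$_p$ produces $y^*\in\{0,1\}^n$ with $\|By^*-t\|_p\le r<\delta$, yielding $\|G_1(y^*)-x\|_p<\delta$. A NO instance guarantees $\|By-t\|_p>r+\tau\ge\delta$ for all $y\in\mathbb{Z}^n$, and in particular for all $z\in\{0,1\}^n\subset\mathbb{Z}^n$, giving $\|G_1(z)-x\|_p\ge\delta$ uniformly. The reduction itself is trivially polynomial-time, so any $O(2^{(1-\epsilon)n})$-time algorithm for the approximate inversion decision problem would decide (0,1)-CVP$_p$ in the same running time, contradicting SETH.

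The main obstacle I anticipate is not algebraic but bookkeeping: verifying that the cited SETH-hardness of (0,1)-CVP$_p$ can be assumed to come from lattice instances with ambient dimension $d=O(n)$, so that the network width $2d$ remains $O(n)$ as the theorem requires. This is indeed the case for the standard $k$-SAT reductions underlying \cite{bennett2017quantitative,aggarwal2021fine}. A secondary concern is that $\tau$ and the entries of $B,t$ in the hard instances should be $\poly(n)$-bounded so that $\delta$ fits into $\poly(n)$ bits; this also holds for the standard constructions and does not impede the reduction.
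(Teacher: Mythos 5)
Your reduction is correct and rests on the same high-level strategy as the paper's proof: reduce from $(0,1)$-CVP$_p$ and stack $W_1=\begin{pmatrix}B\\-B\end{pmatrix}$, $b_1=\begin{pmatrix}-t\\t\end{pmatrix}$ so that the scalar identity $(\max\{a,0\})^p+(\max\{-a,0\})^p=|a|^p$ recovers $\|G_1(z)-x\|_p=\|Bz-t\|_p$ with $x=\vec 0$. However, your instance is noticeably simpler than the paper's. The paper does not feed the CVP coordinates directly into $z\in\{0,1\}^n$; it doubles the latent dimension to $2n$, interleaves each column $\mathbf v_i$ with a zero column, and appends $n$ extra rows of the form $\alpha(z_{2i-1}+z_{2i})-\alpha$ to force $z_{2i-1}+z_{2i}=1$, then decodes $y_i=z_{2i-1}$. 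That complementarity gadget is carried over from the $k$-SLR reduction of \cite{gupte2021fine}, where it is needed to encode a sparsity constraint, but it is superfluous here because $z$ is already constrained to $\{0,1\}^n$; you can simply take $z=y$, as you do. Your simplification has two concrete payoffs. First, your latent dimension equals the CVP dimension $n$, whereas the paper's is effectively $2n$, so your reduction preserves the full $\Omega(2^{(1-\epsilon)n})$ exponent without a factor-of-two loss. Second, your choice of $\delta\in(r,r+\tau]$ handles the strict-versus-non-strict inequalities correctly; the paper sets $\delta=r$, which gives only $\|G_1(z^*)-x\|_p\le\delta$ on the YES side rather than the strict $<\delta$ required by the stated decision problem. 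Your bookkeeping remarks about the ambient dimension $d=O(n)$ and the bit-size of the hard CVP instances are appropriate and not explicitly addressed in the paper's proof either.
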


%\paragraph{Theorem 3} \textit{Assume SETH. Given 2 layer ReLU network. Then determine whether there exists $z \in \{ 0,1 \}^k$ such that $\|G(z)-x\|_p \leq \epsilon $ requires exponential time for integer $p\notin 2\mathbb Z$ and $p \geq 1$.} 

\begin{proof}

We will make a reduction to $(0,1)$-CVP$_p$, which is distinguishing between $\|By^*-\mathbf t \|_p \leq r$ for some $y^*\in \{ 0,1\}^n $ and $\|By-\mathbf t \|_p>r $ for all $y\in \mathbb \{ 0,1\}^n$, where  $B=(\mathbf v_1,\mathbf v_2,\dots,\mathbf v_n)$ and all $ \mathbf v_i \in \mathbb R ^d$, and $\mathbf t$ is target vector.

The construction of weights $\overline{W_1} $ and bias $\overline{b_1} $ in the layer of the generative model is as follows:
\begin{align*}
W_1=
\begin{bmatrix}
\mathbf{v}_1 & \Vec{0} & \mathbf v_2 & \Vec{0} & ... & \mathbf v _n & \Vec{0}\\
\alpha & \alpha & 0 & 0 & ... & 0 & 0\\
0 & 0 & \alpha & \alpha & ... & 0 & 0\\
0 & 0 & 0 & 0 & ... & \alpha & \alpha
\end{bmatrix}, \;
b_1=\begin{bmatrix}
- \mathbf t\\
- \alpha \\
- \alpha \\
- \alpha
\end{bmatrix} 
\end{align*}

for some positive real number $\alpha$ to be determined later.

Let $\overline{W_1}=\begin{bmatrix} W_1 \\ -W_1 \end{bmatrix} $ and $\overline{b_1}=\begin{bmatrix} b_1 \\ -b_1 \end{bmatrix} $. 
%Also consider second layer weights and bias as $W_2= \begin{bmatrix} 1 & 1 & \dots & 1 \end{bmatrix} \in \mathbb R^n$ and $b_2= \begin{bmatrix} 0 & 0 & \dots & 0 \end{bmatrix} \in \mathbb R^n$
%So, the 2-layer generative model is $ \textnormal{ReLU} (W_2 \textnormal{ReLU}(\overline{W_1} z + \overline{b_1}) + b_2 ) $
Thus, the 1-layer generative model is $ G_1(z)= \textnormal{ReLU}(\overline{W_1} z + \overline{b_1}) $
Also, suppose the given observation $x$ is the all-zeros vector and $\delta=r$.

For reduction to $(0,1)$-CVP$_p$ problem, we need to show two things: 1) a YES instance for $(0,1)$-CVP$_p$ implies YES for generative inversion, and 2) a YES for generative implies YES instance for $(0,1)$-CVP$_p$ problem. 

$\implies$ Suppose, for the problem of $(0,1)$-CVP$_p$, there is some $y^* \in \{0,1\}^n$ such that $\|By^*-t \| \leq r$.

Construct the $z^*$ as:
$$z_{2i-1}^*= \begin{cases}
    1, & \text{if } y_i^* = 1\\
    0, & \text{if } y_i^* = 0
    \end{cases} $$
$$z_{2i}^*= \begin{cases}
    0, & \text{if } y_i^* = 1\\
    1, & \text{if } y_i^* = 0
    \end{cases} $$

Then, 
\begin{align*}
    \|G_1(z^*)-x \|_p &= \| \textnormal{ReLU}(\overline{W_1} z^* + \overline{b_1}) \|_p \\
    &= \|W_1z^*+b_1 \|_p \\  &= \|By^*-t\|_p \leq r = \delta
\end{align*}

$\impliedby$ Suppose there is some $z^* \in \{0,1\}^n$ such that $\|G(z^*)-x \|_p < \delta$. Because of ReLU function, $\|\textnormal{ReLU}(\overline{W_1} z^* + \overline{b_1}) \|_p = \| W_1 z^* + b_1 \|_p$. 
For an index $i$, if both of $z_{2i-1}^*,z_{2i}^*$ are $0$ or both are $1$, then  $\| W_1z^*+b_1 \|_p > \alpha > \delta$ for sufficiently large $\alpha$, which contradicts the choice of $z^*$. Therefore, exactly one of $z_{2i-1}^*,z_{2i}^*$ is $0$ and the other is $1$.

Then, construct $y^*$ as follows:
$$ y_i^* = \begin{cases}
    1, & \text{if } z_{2i-1}^* = 1\\
    0, & \text{if } z_{2i-1}^* = 0
    \end{cases} $$

Therefore,
\begin{align*}
    r = \delta  &> \|G_1(z^*)-x \|_p \\ 
    &= \| \textnormal{ReLU}(\overline{W_1} z^* + \overline{b_1}) \|_p \\
    &= \|W_1z^*+b_1 \| \\  &= \|By^*-t\|_p 
\end{align*}

\

This completes the reduction. Since $(0,1)$-CVP$_p$ cannot be solved in $O(2^{(1-\epsilon)n})$-time for any $\epsilon>0$, this also holds for generative inversion when the input is in $\{0,1\}^n$.

%\vspace{5mm}
%If z is n/2-sparse and there is no RELU, then the question is similar to the sparse linear regression problem. For any $z_{2i-1},z_{2i}$, both cannot be 0, otherwise  $\| (Wz-b) \|_p > \alpha > \delta$
%So, exactly one of them is nonzero. If a nonzero element is outside the interval $[1-2\delta/\alpha,1+2\delta/\alpha]$, then we acquire a similar contradiction.

%Then consider, rounded z ($\Bar{z}$), and create y according to the values of y, 
%then we will have $\|By-t\|_p = \|W\Bar{z}-b\|_p \leq \|W\Bar{z} -Wz\|_p + \|W{z}-b\|_p \leq \|W\|_p \cdot k \cdot \frac{2\delta}{\alpha}+\delta \leq \|B\|_p \cdot n \cdot \frac{2\delta}{\alpha}+\delta < r + \tau $

%If generative neural networks can be solved better than $2^n$ time, then CVP can be solved is sub-exponential time. 
%This ends the proof.

%We can  eliminate the ReLU by doing $\Bar{W}=\begin{bmatrix} W \\ -W \end{bmatrix} $.

%Also, if we give a condition that as $z$ is in ${0,1}^n$, then it is clear that for any $z_{2i-1},z_{2i}$, both cannot be 0, and both cannot be 1. (Same with previous).

%So, by 2 more layers, we can suppose that it is binary.

\end{proof}

Hardness for the general case $z \in \mathbb R^n$ are proved by reducing the problem from the binary case. In the case of exact inversion, the transition from $\{0,1\}^n$ to $\mathbb R^n$ was accomplished by adding two extra layers. When we try to apply the same trick in the approximate case, instead of having exact binary values $\{0,1\}$, we result in points clustered in a small interval around $0$ or around $1$. To overcome this issue, we introduce two additional layers that map the points around $0$ to $0$ and those around $1$ to $1$. 

\begin{theorem}
Assume SETH. Then for any $\epsilon>0$, there is a 5-layer ReLU network $G_5$ and an observation $x$ such that there is no $O(2^{(1-\epsilon)n})$ time algorithm to determine whether there exists $z \in \mathbb R ^n$ that satisfies $\|G_5(z)-x\|_p<\delta$ for a given $\delta>0$ and any positive odd number $p$. 
\label{thm_approx_real}    
\end{theorem}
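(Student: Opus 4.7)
The plan is to reduce from Theorem~\ref{thm_approx_binary} by prepending a four-layer preprocessing gadget to its one-layer binary construction, so that an arbitrary $z \in \mathbb{R}^n$ is first squeezed into the ``essentially binary'' regime. The first two layers realize a coordinate-wise clip of $z$ to $[0,1]$, in direct analogy with the $[-1,1]$-clip of Theorem~\ref{thm_exact_real}: layer~$1$ produces $\textnormal{ReLU}(z_i)$ and $\textnormal{ReLU}(z_i-1)$ in parallel, and layer~$2$ outputs $v_i = \textnormal{ReLU}(z_i)-\textnormal{ReLU}(z_i-1) = \min(\max(z_i,0),1) \in [0,1]$ (the outer $\textnormal{ReLU}$ is a no-op because the difference is non-negative).

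The next two layers implement a coordinate-wise ``rounding-toward-the-endpoints'' map $\rho(t):=\textnormal{ReLU}(3t-1)-\textnormal{ReLU}(3t-2)$, which sends $[0,1/3]$ to $0$, $[2/3,1]$ to $1$, and linearly interpolates in between: layer~$3$ exposes $\textnormal{ReLU}(3v_i-1)$ and $\textnormal{ReLU}(3v_i-2)$, and layer~$4$ combines them. Since $\rho$ fixes $\{0,1\}$ pointwise and snaps a positive-measure neighborhood of each endpoint onto it, this is exactly the ``map the points around $0$ to $0$ and those around $1$ to $1$'' step described in the prose preceding the theorem. The fifth and final layer is the binary construction of Theorem~\ref{thm_approx_binary} applied to $\rho(v)$, namely $G_5(z)=\textnormal{ReLU}(\overline{W_1}\rho(v)+\overline{b_1})$ with observation $x=\mathbf{0}$; the resulting network has depth~$5$ and width~$O(n)$. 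Completeness is then immediate: for a YES $(0,1)$-CVP$_p$ instance with witness $y^*$, the paired binary vector used in Theorem~\ref{thm_approx_binary} already lies in $\{0,1\}^{2n}$, so both clipping and rounding leave it unchanged and $\|G_5(z)-x\|_p = \|By^* - t\|_p \le r$.

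Soundness will be the main obstacle. Given $z$ with $\|G_5(z)-x\|_p < \delta$, I set $v=\textnormal{clip}(z)\in [0,1]^n$, and the $\alpha$-block of the final layer enforces $|\rho(v)_{2i-1}+\rho(v)_{2i}-1| \le \delta/\alpha$ for every pair $i$. Whenever both coordinates of every pair fall in the flat regions $[0,1/3]\cup[2/3,1]$, $\rho(v)$ is already in $\{0,1\}^{2n}$ and the binary analysis of Theorem~\ref{thm_approx_binary} yields the desired CVP witness $y$. The delicate situation is when one or both coordinates of a pair fall in the linear band $(1/3,2/3)$, giving a non-binary $\rho(v)$; my plan here is to threshold $y_i := \mathbf{1}[\rho(v)_{2i-1}\ge 1/2]$, bound the additional CVP error by $\|B(y-(\rho(v)_1,\rho(v)_3,\ldots))\|_p$, and invoke the gap-$\tau$ version of $(0,1)$-CVP$_p$ from \cite{aggarwal2021fine} with $\alpha$ and $\tau$ chosen large enough that this rounding slack is absorbed into $\tau$, so that any small-objective $z$ still refutes the NO side of the promise and completes the reduction.
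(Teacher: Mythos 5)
Your overall architecture (clip, round, then plug into Theorem~\ref{thm_approx_binary}'s one-layer construction) matches the paper's, and your $\rho$-gadget is a perfectly reasonable rounding primitive. But your soundness argument has a genuine gap, and it is exactly the point the paper's proof is built to address: \emph{nothing in your construction forces the clipped vector $v$ out of the middle band $(1/3,2/3)$}. The $\alpha$-block in the last layer only constrains the \emph{sums} $\rho(v)_{2i-1}+\rho(v)_{2i}\approx 1$; the pair $\rho(v)_{2i-1}=\rho(v)_{2i}=1/2$ (from $v_{2i-1}=v_{2i}=1/2$) satisfies that constraint exactly while being maximally non-binary. Increasing $\alpha$ does nothing to rule this out. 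Your fallback is to threshold $y_i:=\mathbf{1}[\rho(v)_{2i-1}\geq 1/2]$ and ``absorb the rounding slack into $\tau$,'' but this fails for two reasons: (i) the slack $\|B(y-\rho(v)_{\mathrm{odd}})\|_p$ scales with $\|B\|$ and with per-coordinate deviations as large as $1/2$, so it bears no relation to the CVP instance's gap; and (ii) $\tau$ is a \emph{fixed} (and possibly tiny) promise-gap parameter supplied by the hardness result of~\cite{aggarwal2021fine}, not a knob you get to turn — ``$\tau$ chosen large enough'' is not available to you.

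The paper closes this gap by adding one more \emph{output coordinate} that directly certifies near-binarity. It propagates the auxiliary quantity $t_{n+1}=\sum_i\bigl(\max\{v_i,\tfrac12\}-\min\{v_i,\tfrac12\}\bigr)=\sum_i |v_i-\tfrac12|$ through the extra nodes $u_{n+1},u_{n+2}$, sets the corresponding target entry to $n/2$, and observes that $|v_i-\tfrac12|\le \tfrac12$ with equality iff $v_i\in\{0,1\}$. Thus $\|G_5(z)-x\|_p<\delta$ forces $\tfrac n2-\sum_i|v_i-\tfrac12|<\delta$, hence every $v_i\in[0,\delta]\cup[1-\delta,1]$. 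Only \emph{then} does the rounding (the paper's $W_3,W_4$ layers, or your $\rho$) snap each coordinate exactly to $\{0,1\}$ with zero error, so soundness reduces cleanly to Theorem~\ref{thm_approx_binary} without invoking any promise gap. In short, you need a near-binarity enforcer in the \emph{objective}, not merely a rounding map; your $\rho$ is a fine replacement for the paper's two-layer rounder once you add the $t_{n+1}$-style constraint and move the relevant target entry from $0$ to $n/2$.
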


The following proof is for $\delta<{1}/{4}$. The proof of the theorem for any $\delta>0$ is given in Appendix \ref{proof_thm4_any_delta} with slight changes.
% Assume SETH. Given a 2-layer ReLU network. Then determine whether there exists $z \in \mathbb R ^k$ such that $\|G(z)-x\|_p \leq \epsilon $ requires exponential time for integer $p\notin 2\mathbb Z$ and $p \geq 1$.
\begin{proof}
    The first 2 layers are designed to make the entries $z$ in $[0,1]$, for this purpose $z_i$ is mapped to $v_i=\min\{ \max \{ z_i,0\},1\}$ for all $i \in [n]$ as in proof of Theorem \ref{thm_exact_real}.

    In the $3^{rd}$ layer, the first $n$ nodes are defined by the following formula $u_{[n]}=\textnormal{ReLU}(W_3 v + b_3)$ where $W_3=-  I_n $ and $b_3= \frac{1}{2}\cdot\Vec{1} \in \mathbb R^n$. Also, introduce 2 more nodes that $u_{n+1}=\sum_{i=1}^n \max\{v_i,1/2\}$ and $u_{n+2}=\sum_{i=1}^n -\min \{ v_i,1/2\}$.

    In the $4^{th}$ layer, the first $n$ nodes are defined by $t_{[n]}=\textnormal{ReLU}(W_4 u_{[n]} + b_4)$ where $W_4=- 4\cdot I_n $, and $b_4= \Vec{1} \in \mathbb R^n$. Also, we add another node $t_{n+1}=u_{n+1}+ u_{n+2}= \sum_{i=1}^n \max\{v_i,1/2\} + \sum_{i=1}^n -\min \{ v_i,1/2\}$.

    In the last layer, construction is similar to the proof of Theorem \ref{thm_approx_binary} with an addition of one more node given by $s_{m+1}= t_{n+1}$ so that the output is $s \in \mathbb R^{m+1}$.

    The given output $x$ is $x=[\Vec{0},n/2]\in \mathbb R^{m+1}$. 

    We need to show two things for reduction: 1) A YES instance for $(0,1)$-CVP$_p$ implies YES for generative model inversion, and 2) a YES for generative model inversion implies YES for $(0,1)$-CVP$_p$.

    $\implies$ The first part is straightforward as in Theorem \ref{thm_approx_binary}. 
    
    $\impliedby$ For the second part, suppose there is a $z^* \in R^n$ such that $\|G_5(z^*)-x \|_p < \delta$. The first 2 layers bound the input $z$, so we take the corresponding bounded values $v^* \in [0,1]^n$. 
    
    We know that if a real number $a \in [0,1]$, then $\max\{a,1/2\}-\min\{a,1/2\} \leq 1/2$, and the equality holds only when $a=0$ or $a=1$. We are given $x_{n+1}=n/2$. 

    \begin{align*}
        \delta  & > \|G_5(z^*)-x \|_p \geq |G_5(z^*)_{n+1}-x_{n+1} | \\
        &=|s_{n+1}-\frac{n}{2} |  =|t_{n+1}-\frac{n}{2} |  \\  
        &= |\sum_{i=1}^n \max\{v_i,1/2\} + \sum_{i=1}^n -\min \{ v_i,1/2\} - \frac{n}{2} | \\
        &= \frac{n}{2} - \sum_{i=1}^n (\max\{v_i,1/2\} -\min \{ v_i,1/2\})
    \end{align*}
    
    As a result, we ensure that all $v_i, i\in [n]$  are in $[0,\delta]$ or $[1-\delta,1]$.

%    For an index $i$, if both of $v_{2i-1}^*,v_{2i}^*$ are in $[0,\delta]$ or both are in $[1-\delta,1]$, then  $\| W_1z^*+b_1 \|_p > \alpha (1-2\delta) > \delta$ for sufficiently large $\alpha$, which gives a contradiction. So, exactly one of $v_{2i-1}^*$ and $v_{2i}^*$ is in $[0,\delta]$ and the other is $[1-\delta,1]$.

    In the $3^{rd}$ layer, the first $n$ entries are $u_{[n]}=\textnormal{ReLU}(W_3 v + b_3)$ where $W_3=-  I_n $, and $b_3=  \frac{1}{2}\cdot\Vec{1} \in \mathbb R^n$. This means that, for all $i \in [n]$, $u_{i} =0$ if $v_i \in [1-\delta,1]$ and $u_{i} \in [1/2-\delta,1/2]$ if $v_i \in [0,\delta]$.

    In the $4^{th}$ layer, $t_{[n]}=\textnormal{ReLU}(W_4 u_{[n]} + b_4)$ where $W_4=- 4\cdot I_n $, and $b_4= \Vec{1} \in \mathbb R^n$. This means that, for all $i \in [n]$, $t_{i} = 1$ if $v_i \in [1-\delta,1]$ and $t_{i} =0$ if $v_i \in [0,\delta]$

    In the $5^{th}$ layer, the input is a vector with ${0,1}$ entries in the first $n$ coordinates. The construction of $5^{th}$ layer is the same as the proof of Theorem \ref{thm_approx_binary}, so the reduction to the binary case is completed.

\end{proof}

\subsection{Reduction from Half-Clique}\label{sec:halfclique}

In the above results, we demonstrated SETH-hardness for the generative inverse problem for $\ell_p$ norm for positive odd integer $p$ through Theorem \ref{thm_approx_binary} and \ref{thm_approx_real}. 

To handle the case of even $p$, we cannot reduce from CVP, following arguments in~\cite{aggarwal2021fine}. Instead, we achieve this through two reductions: the first reduction is from the Half-Clique problem which is introduced in Section \ref{sec:halfclique}, and the second reduction is from the Vertex Cover problem, which is explained in Section \ref{sec:vertexcover}. Both of these approaches enable us to show hardness via a reduction from ETH. 

%\section{Additive Approximation Hardness for $\ell_2$ norm}

First, we demonstrate a $2^{\Omega(n)}$ lower bound for binary inputs via a reduction from Half-Clique.

\begin{theorem}
    Assume ETH. There is a 1-layer, $O(n^2)$-width ReLU network $G_1$ and an observation $x$ such that computational complexity to determine whether there exists a $z \in \{ 0,1 \}^n$ with $\|G_1(z)-x\|_p \leq \delta $ is $2^{\Omega(n)}$ for a given $\delta>0$ and any positive even number $p$. 
    \label{thm_approx_l_even_binary}
\end{theorem}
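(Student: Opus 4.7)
My plan is to reduce Half-Clique to approximate inversion. Given a graph $H = (V,E)$ with $|V|=n$ (assume $n$ is even, padding with isolated vertices if not), let $\bar{E}$ denote the set of non-edges. I would construct a single-layer ReLU network $G_1 \colon \mathbb{R}^n \to \mathbb{R}^{|\bar{E}|+1}$ with two kinds of output neurons: (i) for each non-edge $\{i,j\}\in\bar{E}$, a neuron $\textnormal{ReLU}(z_i + z_j - 1)$, which equals the product $z_i z_j$ on binary inputs; and (ii) a single count neuron $\textnormal{ReLU}(\sum_i z_i)$, which equals $\sum_i z_i$ on $z \in \{0,1\}^n$. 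The target vector $x$ is zero on the first $|\bar{E}|$ coordinates and equal to $n/2$ on the last coordinate, and I would set the threshold to $\delta = 1/2$. The latent dimension is $n$ and the width is $|\bar{E}|+1 = O(n^2)$, matching the theorem statement.

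For the forward direction, if $S \subseteq V$ is a clique of size $n/2$, I will take $z$ to be the indicator vector of $S$. Then the count coordinate equals $n/2$ (matching the target), and for every non-edge $\{i,j\}$ at least one endpoint lies outside $S$, so the corresponding neuron outputs $0$. Thus $\|G_1(z) - x\|_p = 0 \leq \delta$. For the reverse direction, I would suppose $z \in \{0,1\}^n$ satisfies $\|G_1(z)-x\|_p \leq \delta < 1$. Each non-edge coordinate of $G_1(z)-x$ lies in $\{0,1\}$, and the count coordinate equals $\sum_i z_i - n/2 \in \mathbb{Z}$, so every summand of $\|G_1(z)-x\|_p^p$ is the $p$-th power of an integer. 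Since their sum is strictly less than $1$, every summand must vanish: the count coordinate forces $|S|=n/2$ for $S = \{i: z_i=1\}$, and the vanishing of the non-edge coordinates forces $S$ to be a clique.

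Finally, under ETH, Half-Clique on $n$-vertex graphs cannot be solved in $2^{o(n)}$ time \cite{lokshtanov2013lower}, and since the latent dimension of the constructed instance equals the vertex count, any $2^{o(n)}$-time algorithm for the generative inversion decision problem would refute ETH.

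The main subtlety is the soundness argument. The network is engineered so that, on binary inputs, every output coordinate takes integer values; choosing $\delta < 1$ then promotes the approximate constraint to an exact combinatorial one. A useful side effect is that this integer-structure argument does not depend on the parity of $p$, so the same construction actually works for every $p \geq 1$ and not just for even $p$. If one wanted an arbitrary user-specified $\delta > 0$, I would uniformly scale all weights and biases by $2\delta$; integrality is preserved up to the scaling factor, and the same separation argument applies.
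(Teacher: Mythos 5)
Your proposal is correct and proves the theorem, but it takes a noticeably different — and in several ways cleaner — route than the paper. Both proofs reduce from Half-Clique and both produce a one-layer network of width $O(n^2)$, but the internal mechanisms differ. The paper works with a \emph{weighted} Half-Clique instance (does a half-clique of total weight $<M$ exist?), places one output row for \emph{every} pair of vertices, carries two free ``large'' constants $\alpha$ (penalizing non-edges) and $\beta$ (penalizing wrong cardinality), and uses the $\overline{W}=\begin{bmatrix}W\\-W\end{bmatrix}$ doubling trick so that the post-ReLU output encodes $|Wz+b|$ coordinatewise; soundness is then a quantitative comparison of $\delta^p$ against $3^p\alpha^p + (Z-1)\alpha^p$ and $\beta^p$. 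Your construction instead reduces from \emph{unweighted} Half-Clique, uses one neuron per \emph{non-edge} plus a single count neuron, needs no tunable constants and no row-doubling, and gets soundness from a parity/integrality argument: on $\{0,1\}^n$ inputs every coordinate of $G_1(z)-x$ is an integer, so $\|G_1(z)-x\|_p^p<1$ forces every coordinate to be exactly zero. That promotion of an approximate constraint to an exact combinatorial one is the key idea, and it is genuinely simpler than the paper's threshold bookkeeping. Your side observation is also correct: the integrality argument does not depend on the parity of $p$, so this particular reduction yields ETH-hardness for all $p\geq 1$ — the paper restricts its Half-Clique argument to even $p$ only because for odd $p$ it has the stronger SETH bound via CVP, not because the Half-Clique route would fail. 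Two minor points worth polishing: you should say ``the $p$-th power of a \emph{nonnegative} integer'' (after taking absolute values inside the $\ell_p$ norm), and it is worth stating explicitly that scaling weights, biases, \emph{and the target} by a common positive factor commutes with ReLU and hence transports the argument to any prescribed $\delta>0$, which you gesture at but do not spell out.
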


\begin{proof}
    We will make a reduction to a 1-layer ReLU network $G_1$ from the Half-Clique problem on a positive edge-weighted graph $G(V,E)$ where $|V|=n$. The Half-Clique problem is to determine if a half-clique with a total weight less than $M$ exists. We will use the same trick as in the proof of Theorem \ref{thm_approx_binary} that $G_1(z)=\textnormal{ReLU}(W_1z+b_1)$ where ${W_1}=\begin{bmatrix} W \\ -W \end{bmatrix} $ and ${b_1}=\begin{bmatrix} b \\ -b \end{bmatrix} $. Thus, we can consider the problem definition to be $\|Wz+b\|_p \leq \delta$ when the target is the all zeros vector, i.e., $x=\Vec{0}$.
    
    Firstly, label the vertices by $1,\dots,n $ arbitrarily. Construct the matrix $C \in \mathbb 
    R^{ \binom n 2 \times n}$ where each column represents a vertex and each row represents a pair of vertices, and construct the vector $c \in \mathbb R^{\binom n 2}$ as follows. For an edge $e(i,j) \in E$ with edge weight $w_e$, the corresponding values in matrix $C$ and vector $c$ are given by
    $C_{ek}=2\sqrt[\leftroot{-2}\uproot{2}p]{w_{e}}$ if $k\in \{i,j\}$, $C_{ek}=0$ otherwise, and $c_e=-\sqrt[\leftroot{-2}\uproot{2}p]{w_e}$. For a non-edge $e(i,j) \notin E$, the corresponding values in matrix $C$ and vector $c$ are $C_{ek}=2\alpha$ if $k\in \{i,j\}$, $C_{ek}=0$ otherwise, and $c_e=-\alpha$, here $\alpha$ is a large constant which will be determined in the analysis.

    The matrix $W \in \mathbb R^{(\binom n 2 +1) \times n}$ is constructed as the concatenation of $C$ and $\beta\cdot\vec{1}=[\beta, \beta, \dots, \beta] \in \mathbb R^{1 \times n}$, and vector $b \in \mathbb R^{n+1}$ is defined by $b_{[n]}=c$ and $b_{n+1}=-(n/2)\beta$, here $\beta$ is a large constant which will be defined in the analysis. Let $Z$ be the number of non-edges, and the given value $\delta$ be 
    $$
    \delta = \sqrt[\leftroot{-2}\uproot{2}p]{ \sum_{e \in E} w_e + \alpha^p Z  + (3^p-1) M }.$$

    We show the equivalence of the problems as before. 
    
    $\implies$ If there is a half-clique $C$ with weight less than $M$, then consider vector $z \in \{0,1\}^n$ as $i^{th}$ entry is $1$ if and only if $i^{th}$ vertex is in clique $C$. Since it is a half-clique, exactly $n/2$ entries of $z$ is $1$, so the last entry of $Wz+b$ is $0$. All the pairs that appear in the clique have an edge, so that the contribution to the $\|Wz+b\|_p^p$ is $3^pw_e$. For a non-edge, both values in $z$ cannot be $1$, because only the clique vertices get value $1$. In any case, a non-edge contributes $\alpha^p$. Similarly, for an edge that does not appear in the clique, both values in $z$ cannot be $1$. Thus, the contribution will be $w_e$ for each of those edges. As a sum;
    \begin{align*}
      \|Wz+b\|_p^p &= 3^p \sum_{e \in C} w_e+ \alpha^p |\{e \notin E\}| + \sum_{e \in E \setminus C} w_e \\
      &= (3^p-1) \sum_{e \in C} w_e+ \alpha^p Z + \sum_{e \in E } w_e \\
      &< (3^p-1) M + \alpha^p Z + \sum_{e \in E } w_e = \delta^p
    \end{align*}
    which completes one direction.

    $\impliedby$ If there is a $z \in \{0,1\}^n$ such that $\|Wz+b \|_p<\delta$, then we need to show that the corresponding vertices that take $1$ in vector $z$ (say $V_z$) gives a half-clique with weight less than $M$. If there is a non-edge in a pair from $V_z$, the contribution of that pair to $\|Wz+b\|_p^p$ is $3^p\alpha^p$. For any other non-edge, the contribution is at least $\alpha^p$. This means that we have $\|Wz+b\|_p^p\geq 3^p\alpha^p + (Z-1) \alpha^p$. Because of the choice of $\delta$, $$\|Wz+b\|_p^p\geq 3^p\alpha^p + (Z-1) \alpha^p > \delta^p$$ for sufficiently large $\alpha$. So, every pair of nodes in the $V_z$ is connected by an edge, forming a clique. Also, if $|V_z|$ is different than $n/2$, then the contribution of the last entry to $\|Wz+b \|_p^p$ is at least $\beta^p$ which can be selected larger than $\delta^p$. Then, the clique is a half-clique; let's denote it by $C$. Summing, we get:
    \begin{align*}
       \sum_{e \in E} w_e + \alpha^p Z  + (3^p-1) M  &= \delta^p \\ &> \|Wz+b\|_p^p \\ 
      &= 3^p \sum_{e \in C} w_e+ \alpha^p |\{e \notin E\}| + \sum_{e \in E \setminus C} w_e \\
      &= (3^p-1) \sum_{e \in C} w_e+ \alpha^p Z + \sum_{e \in E } w_e
    \end{align*}
    Thus, the half-clique C has weight $ \sum_{e \in C} w_e <M$. 

    This completes the reduction.

\end{proof}

Since Half-Clique is a special case of $k$-Clique problem and under the assumption of ETH, the computational complexity of Clique is $2^{\Omega(n)}$ \cite{lokshtanov2013lower}. Our reduction above implies ETH hardness for approximately inverting models. 

We also present hardness results for dealing with real valued latent inputs in Theorem \ref{thm_approx_l_even_real}. To transition from binary to real numbers, the same approach as used in the proof of Theorem \ref{thm_approx_real} can be followed. Specifically, by constructing a ReLU network with $5$ layers, the problem can be reduced to the binary case whose ETH hardness is proved in Theorem \ref{thm_approx_l_even_binary}. We omit its proof for brevity.

%\paragraph{Theorem 5} \textit{Given 1 layer ReLU network. Then determining whether there exists $z \in \{ 0,1 \}^n$ such that $\|G(z)-x\|_2 \leq \epsilon $ is NP-hard.}

%\paragraph{Proof. } We know that Half-Clique is an NP-complete problem [?]. We will show by reduction, the calculation of inverse generative neural network is as hard as the Half-Clique problem. 

%For the Half-Clique problem, we have a graph $G(V, E)$, where $|V|=n$. The purpose is to determine if there is a clique with size $n/2$. According to this information, we will construct a generative neural network as follows:

\begin{theorem}
    Assume ETH. There is a 5-layer, $O(n^2)$-width ReLU network $G_5$ and an observation $x$ such that computational complexity to determine whether there exists a $z \in \{ 0,1 \}^n$ with $\|G_5(z)-x\|_p \leq \delta $ is $2^{\Omega(n)}$ for a given $\delta>0$ and any positive even number $p$. 
    \label{thm_approx_l_even_real}
\end{theorem}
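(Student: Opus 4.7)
The plan is to stitch together the two constructions already developed in the paper: the four-layer binarization gadget from the proof of Theorem \ref{thm_approx_real} and the one-layer Half-Clique reduction from Theorem \ref{thm_approx_l_even_binary}. Concretely, I would build $G_5$ so that the first four layers take an arbitrary $z \in \mathbb{R}^n$ and produce a vector in $\{0,1\}^n$ together with a bookkeeping coordinate that is forwarded to the output, and then use the fifth layer to apply the Half-Clique gadget to the now-binary vector.

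First, layers 1 and 2 clip each coordinate using the standard two-layer ReLU composition to obtain $v_i = \min\{\max\{z_i,0\},1\} \in [0,1]$, exactly as in the proof of Theorem \ref{thm_approx_real}. Layer 3 then produces, in addition to the $n$ primary nodes $u_i = \textnormal{ReLU}(-v_i + 1/2)$, two summary nodes $u_{n+1} = \sum_i \max\{v_i,1/2\}$ and $u_{n+2} = \sum_i -\min\{v_i,1/2\}$. Layer 4 computes the snapping $t_i = \textnormal{ReLU}(-4 u_i + 1)$ as well as $t_{n+1} = u_{n+1} + u_{n+2}$, and forwards $t_{n+1}$ to an auxiliary output coordinate with target value $n/2$. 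Exactly as in Theorem \ref{thm_approx_real}, the $t_{n+1}$ coordinate alone already contributes $|n/2 - t_{n+1}| \le \delta$ to the $\ell_p$ error, which for sufficiently small $\delta'$ forces every $v_i$ to lie in $[0,\delta'] \cup [1-\delta',1]$, after which the layer-4 snapping maps each $t_i$ exactly to $0$ or $1$.

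Finally, layer 5 is the one-layer Half-Clique gadget of Theorem \ref{thm_approx_l_even_binary} applied to $t_{[n]} \in \{0,1\}^n$, together with the pass-through of $t_{n+1}$. The overall target $x$ is the concatenation of the Half-Clique target from Theorem \ref{thm_approx_l_even_binary} with the scalar $n/2$, and the overall threshold is obtained by combining the two individual thresholds under the $\ell_p$ norm, namely $\delta_{5} = \bigl(\delta_{\text{HC}}^p + \delta_{\text{bin}}^p\bigr)^{1/p}$ with $\delta_{\text{bin}}$ chosen strictly smaller than the gap required for the snapping. Both implications then follow: given a half-clique of weight less than $M$, the indicator vector lifted to $\{0,1\}^n \subset \mathbb{R}^n$ yields $v = t$ and the error is exactly that of Theorem \ref{thm_approx_l_even_binary}; conversely, any approximate inverter produces a $z^*$ whose induced $t^*_{[n]} \in \{0,1\}^n$ satisfies the hypothesis of Theorem \ref{thm_approx_l_even_binary} and therefore encodes a half-clique.

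The main obstacle is quantitative: I need to verify that the slack introduced by the auxiliary coordinate $t_{n+1}$ and the snapping layers can be absorbed into a single threshold $\delta$ without spoiling the gap argument of Theorem \ref{thm_approx_l_even_binary}, which depends on the large constants $\alpha,\beta$ there. This is a matter of picking $\delta_{\text{bin}} \ll \delta_{\text{HC}}$ and rescaling the constants of the Half-Clique layer so that the $p$-th-power budget devoted to the binarization coordinate is negligible compared to the Half-Clique gap. Once this calibration is done, the width is $O(n^2)$ (dominated by the fifth layer), the reduction is polynomial-time, and any $o(2^{\Omega(n)})$ algorithm for $G_5$-inversion would, through Theorem \ref{thm_approx_l_even_binary}, contradict ETH-hardness of Half-Clique \cite{lokshtanov2013lower}.
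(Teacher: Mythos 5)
Your overall plan---feeding the four-layer ``clip-then-snap'' binarization gadget of Theorem~\ref{thm_approx_real} into the one-layer Half-Clique gadget of Theorem~\ref{thm_approx_l_even_binary}---is exactly what the paper has in mind when it says ``the same approach as used in the proof of Theorem~\ref{thm_approx_real} can be followed'' and then omits the details. The layer-by-layer description, the pass-through of the bookkeeping coordinate $t_{n+1}$, and the claim that the width is $O(n^2)$ (coming from the $\binom n2$ rows of the fifth layer) are all consistent with the paper's construction.

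The one place you should be more careful is the calibration step that you flag as the ``main obstacle.'' Setting $\delta_5=(\delta_{\mathrm{HC}}^p+\delta_{\mathrm{bin}}^p)^{1/p}$ and imagining that a budget of $\delta_{\mathrm{bin}}$ is ``devoted'' to the bookkeeping coordinate does not actually force binarization: in the reverse direction all you get from $\|G_5(z^*)-x\|_p<\delta_5$ is $|t_{n+1}-n/2|<\delta_5$, and $\delta_5\approx\delta_{\mathrm{HC}}$ can be large (it grows with $\sum_e w_e$, $\alpha$, $\beta$). Rescaling the Half-Clique layer upward makes $\delta_5$ larger, not smaller, so it does not help. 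The fix the paper uses in Appendix~\ref{proof_thm4_any_delta} for the odd-$p$ case is the opposite direction: keep a single threshold $\delta$, rescale the \emph{clipping} in layers $1$--$2$ to the interval $[0,c\delta]$, set the target for the bookkeeping coordinate to $nc\delta/2$, and choose $c$ large enough (in particular $(c-2)\delta\ge1$) so that $|t_{n+1}-nc\delta/2|<\delta$ forces every $v_i$ into $[0,\delta]\cup[(c-1)\delta,c\delta]$, after which layers $3$--$4$ snap $t_i$ exactly to $\{0,1\}$. Equivalently, you could multiply the pass-through weight on $t_{n+1}$ in the fifth layer by a large factor $\mu$ so that $|t_{n+1}-n/2|<\delta_5/\mu$. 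Either way you need to enlarge the binarization gadget relative to $\delta_5$, not shrink the ``budget'' you assign to it. With that correction your proof is complete and matches the paper's intent.
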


\subsection{Reduction from Vertex Cover}\label{sec:vertexcover}

In Section \ref{sec:halfclique}, we showed $2^{\Omega(n)}$ hardness of approximate generative inversion by reduction from Half-Clique. Here, we give the following proof of Theorem \ref{thm_approx_l_even_binary} by reduction from Vertex Cover.

\begin{proof}
    Consider the graph $G=(V,E)$. The Vertex Cover problem asks if $G(V,E)$ has a vertex cover of size $q$. Counting the number of edges takes $O(n^2)$ time. Let $Z$ denote the number of edges. Construct a matrix $C \in \mathbb R^{\binom{n}{2} \times n}$ such that each row represents a pair of vertices and each column represents a vertex, also construct a vector $c\in \mathbb R^{\binom{n}{2}}$ as follows: For an edge $e(i,j) \in E$, put $2\alpha$ to the corresponding edge-vertex intersection of $C$, i.e. $C_{ei}=C_{ej}=2\alpha$, and put $-\alpha$ to the $c_e$. All the other entries in the matrix $C$ and the vector $c$ are set to be $0$.

    Matrix $W \in \mathbb R^{(\binom n 2 +1) \times n}$ is constructed as the concatenation of $C$ and $\beta\cdot\vec{1}=[\beta, \beta, \dots, \beta] \in \mathbb R^{1 \times n}$, and vector $b \in \mathbb R^{n+1}$ is defined by $b_{[n]}=c \in \mathbb R^n$ and $b_{n+1}=-(n-q)\beta$, here $\beta$ is a large constant which will be defined in the analysis.    Let the given value $\delta$ be $\sqrt[\leftroot{-2}\uproot{2}p]{Z \alpha^p}$.

    We show the equivalence of the problems as before. 
    
    $\implies$ Suppose that there is a vertex cover $D$ with size $q$. Then, consider vector $z^* \in \mathbb \{0,1\}^n$ such that $i^{th}$ entry is $0$ if and only if $D$ includes $i^{th}$ vertex.
    
    For each edge $e(i,j) \in E$, at least one of the vertices belongs to $D$. Then, for this edge $e(i,j)$, either $z_i^*$ or $z_j^*$ is $0$. In any case the contribution to the $\|Wz^*+b\|_p^p $ is $\alpha^p$. For a non-edge, all the values on the row are $0$. Thus,

    $$\|Wz^*+b\|_p^p = \sum_{e \in E} \alpha^p = Z\alpha^p =\delta^p$$
    which completes one direction.

    $\impliedby$  Suppose that there is a $z \in \{0,1\}^n$ such that $\|Wz+b \|_p \leq \delta$, then we need to show that the corresponding vertices that take $0$ in vector $z$ (say $D$) gives a vertex cover with size $q$.

    Assume that there is an edge connecting a pair from $V \setminus D$, the contribution of that pair to $\|Wz+b\|_p^p$ is $3^p\alpha^p$. For any edge, the contribution is at least $\alpha^p$. It means that we have $\|Wz+b\|_p^p\geq 3^p\alpha^p + (Z-1) \alpha^p > Z \alpha^p = \delta^p$, which gives a contradiction. So, every edge has a node in $D$, forming a vertex cover. Also, if $|D|$ is different than $q$, then the contribution of the last entry to $\|Wz+b \|_p^p$ is at least $\beta^p$ which can be selected larger than $\delta^p$. Then, it signifies that there exists a vertex cover of size $q$.
    This completes the reduction. 
\end{proof}

%The computational complexity of the Vertex Cover Problem is $2^{\Omega(n)}$ \cite{lokshtanov2013lower}. Then approximate inverse generative models are ETH hard.

%{Assume ETH. Given 3 layer ReLU network. Then determining whether there exists $z \in \mathbb R ^k$ such that $\|G(z)-x\|_2 \leq \epsilon $ is NP-hard.} 

%\paragraph{Theorem 5(possible)} {Assume ETH. Given 1 layer ReLU network. Then determine whether there exists $z \in \{ 0,1 \}^k$ such that $\|G(z)-x\|_2 \leq \epsilon $ requires $2^o(n)$ time.}

%\paragraph{Theorem 6(possible)} {{Assume ETH. Given 3 layer ReLU network. Then determine whether there exists $z \in \mathbb R ^k$ such that $\|G(z)-x\|_2 \leq \epsilon $ requires $2^o(n)$ time}}

\newpage
\bibliographystyle{IEEEtran}
\bibliography{refs.bib}

\appendix

\section*{Appendix}
\section{Proof of Theorem \ref{thm_approx_real} for General Case} \label{proof_thm4_any_delta} 
\begin{proof}
    In the following proof, $c$ is a large constant.
    
    The first 2 layers of $G(\cdot)$ are designed to force the entries of $z$ to lie in $[0,c\delta]$. For this purpose $z_i$ is mapped to $v_i=\min\{ \max \{ z_i,0\},c\delta\}$ for all $i \in [n]$, as in proof of Theorem \ref{thm_exact_real}.

    In the $3^{rd}$ layer, the first $n$ nodes are defined by the following formula:
    $$u_{[n]}=\textnormal{ReLU}(W_3 v + b_3)$$ where $W_3= -  I_n $ and $b_3= (c-1)\delta\cdot\Vec{1} \in \mathbb R^n$. Also, introduce 2 more nodes: $$u_{n+1}=\sum_{i=1}^n \max\{v_i,c\delta/2\},  u_{n+2}=\sum_{i=1}^n -\min \{ v_i,c\delta/2\}.$$

    In the $4^{th}$ layer, the first $n$ nodes are defined by $t_{[n]}=\textnormal{ReLU}(W_4 u_{[n]} + b_4)$ where $W_4=-\cdot I_n $, and $b_4= \Vec{1} \in \mathbb R^n$. Also, we add another node 
    $$t_{n+1}=u_{n+1}+ u_{n+2}= \sum_{i=1}^n \max\{v_i,c\delta/2\} + \sum_{i=1}^n -\min \{ v_i,c\delta/2\}.$$

    In the last layer, the construction is similar to the proof of Theorem \ref{thm_approx_binary} with an addition of one more node given by $s_{m+1}= t_{n+1}$ so that the output is $s \in \mathbb R^{m+1}$. The given output $x$ is $x=[\Vec{0},n/2]\in \mathbb R^{m+1}$. 

    We need to show two facts: 1) A YES instance for CVP implies a YES for generative model inversion, and 2) YES for generative inversion implies YES instance for CVP.

    $\impliedby$ The first part is straightforward as in Theorem \ref{thm_approx_binary}. 
    
    $\implies$ For the second part, suppose there is a $z^* \in R^n$ such that $\|G_5(z^*)-x \|_p < \delta$. The first 2 layers serve to bound the input $z$, so we take the corresponding bounded output values $v^* \in [0,c\delta]^n$. 
    
    We know that, if a real number $a \in [0,1]$, then $\max\{a,c\delta/2\}-\min\{a,c\delta/2\} \leq c\delta/2$, and the equality holds only when $a=0$ or $a=c\delta$. We are given $x_{n+1}=nc\delta/2$. Then,

    \begin{align*}
        \delta  & > \|G_5(z^*)-x \|_p \geq |G_5(z^*)_{n+1}-x_{n+1} | \\
        &=|s_{n+1}-\frac{n c\delta}{2} |  =|t_{n+1}-\frac{n c\delta}{2} |  \\  
        &= \Big|\sum_{i=1}^n \max\{v_i,c\delta/2\} + \sum_{i=1}^n -\min \{ v_i,c\delta/2\} - \frac{nc\delta}{2} \Big| \\
        &= \frac{nc\delta}{2} - \sum_{i=1}^n (\max\{v_i,c\delta/2\} -\min \{ v_i,c\delta/2\}) \,.
    \end{align*}
    
    As a result, we make sure that all $v_i, i\in [n]$  are in $[0,\delta]$ or $[c\delta-\delta,c\delta]$.

    In the $3^{rd}$ layer, the first $n$ entries are $u_{[n]}=\textnormal{ReLU}(W_3 v + b_3)$ where $W_3=-  I_n $, and $b_3=  (c-1)\delta \cdot\Vec{1} \in \mathbb R^n$. This means that, for all $i \in [n]$, $u_{i} =0$ if $v_i \in [c\delta-\delta,c\delta]$ and $u_{i} \in [(c-2)\delta,(c-1)\delta]$ if $v_i \in [0,\delta]$.

    In the $4^{th}$ layer, $t_{[n]}=\textnormal{ReLU}(W_4 u_{[n]} + b_4)$ where $W_4=-  I_n $, and $b_4= \Vec{1} \in \mathbb R^n$. This means that for all $i \in [n]$, $t_{i} = 1$, if $v_i \in [c\delta-\delta,c\delta]$ and $t_{i} =0$ if $v_i \in [0,\delta]$.

    In the $5^{th}$ layer, the input is a vector with ${0,1}$ entries in the first $n$ coordinates. The construction of $5^{th}$ layer is the same as the proof of Theorem \ref{thm_approx_binary}. Therefore, the reduction to the binary case is complete.

\end{proof}

\end{document}